\ificcvfinal\pagestyle{empty}\fi
\newcommand{\lt}{\left}
\newcommand{\rt}{\right}
\newcommand{\la}{\lt\langle}
\newcommand{\ra}{\rt\rangle}
\newcommand{\ip}[1]{\la #1 \ra}
\newcommand{\R}{\mathbb{R}}
\newcommand{\diag}{\text{diag}\,}
\newcommand{\trace}{\text{trace}\,}
\newcommand{\T}{^{\mathsf{T}}}
\newtheorem{claim}{Claim}
\begin{document}

\title{Learning Robust Representations for Computer Vision}

\author{Peng Zheng\\
University of Washington\\
Seattle, WA 98195-4322, USA\\
{\tt\small zhengp@uw.edu}
\and
Aleksandr Y. Aravkin\\
University of Washington\\
Seattle, WA 98195-4322, USA\\
{\tt\small saravkin@uw.edu}
\and
Karthikeyan Natesan Ramamurthy\\
IBM T.J. Watson Research Center\\
Yorktown Heights, NY 10598, USA\\
{\tt\small knatesa@us.ibm.com}
\and
Jayaraman Jayaraman Thiagarajan\\
Lawrence Livermore National Laboratory\\
Livermore, CA 94550, USA\\
{\tt\small  jjayaram@llnl.gov}
}

\maketitle

\begin{abstract}
Unsupervised learning techniques 
in computer vision often require learning latent representations, 
such as low-dimensional linear and non-linear subspaces. Noise and outliers in the data can frustrate these approaches by obscuring 
the latent spaces. 

Our main goal is deeper understanding and new development of robust approaches for representation learning.
We provide a new interpretation for existing robust approaches and present two specific contributions: a new robust PCA approach, 
which can separate foreground features from dynamic background, 
and a novel robust spectral clustering method, that can cluster facial images with high accuracy.  
Both contributions show superior performance to standard methods on real-world test sets.

\end{abstract}
\section{Introduction}
Supervised learning, and in particular deep learning~\cite{lecun2015deep, schmidhuber2015deep}, 
have been very successful in computer vision. Applications include autoencoders~\cite{vincent2010stacked} that map between noisy and clean images~\cite{xie2012image}, convolutional networks for image/video analysis \cite{karpathy2014large}, 
and generative adversarial networks that synthesize real world-like images~\cite{goodfellow2014generative}.

In contrast,
unsupervised learning still poses significant challenges. 
Broadly, unsupervised learning seeks to discover hidden structure in the data without using ground truth labels, thereby revealing features of interest.

In this paper, we consider unsupervised representation learning methods which can be used along with centroid-based clustering to summarize the data distribution using a few characteristic samples.

We are interested in spectral clustering \cite{ng2002spectral} and subspace clustering \cite{elhamifar2013sparse}; the proposed ideas can also be generalized to deep embedding-based clustering strategies \cite{xie2016unsupervised}.
 {\it Spectral clustering} methods use neighborhood graphs to learn the underlying representation \cite{ng2002spectral}; this approach is used for image segmentation \cite{shi2000normalized, zelnik2005self} and 3D mesh segmentation \cite{liu2004segmentation}.  
 {\it Subspace clustering} methods model the dataset as a union of low-dimensional linear subspaces and utilize sparse and low-rank methods to obtain the representation; 
this model is used for facial clustering and recognition \cite{elhamifar2013sparse, shakhnarovich2011face}.

Learning effective latent representations hinges on accurately modeling noise and outliers. Further, in practice, the data satisfy the structural assumptions (union of subspaces, low rank, etc.)
only approximately. Adopting robust optimization strategies is a natural way to combat these challenges. For example, consider principal component analysis (PCA), a prototypical representation learning method based on matrix factorization. Given low-rank data contaminated by outliers, 
the classical PCA method will fail to find it. Consequently, the robust PCA (rPCA) method~\cite{candes2011robust}, which decomposes data into low rank and sparse components, is preferred in practice, e.g. background/foreground separation~\cite{candes2011robust, sobral2016lrslibrary}. 
Similarly, when data assumed to be from a union of subspaces is contaminated by outliers, allowing for sparse outliers during optimization 
leads to accurate recovery of the subspaces, e.g. face classification~\cite{wright2009robust}. 

Our goal is to develop effective robust formulations for unsupervised representation learning tasks in computer vision; we are interested in complex situations, when the data is corrupted with a combination of sparse outliers and dense noise.

{\bf Contributions.} We first review the relationship between outlier models and 
statistically robust formulations. In particular, we show that the rPCA formulation is equivalent 
to solving a Huber regression problem for low-rank representation learning.
Using this connection, we develop a new nonconvex penalty, dubbed the Tiber, 
designed to aggressively penalize mid-sized residuals. 
In Section \ref{sec:rPCA}, we show that this penalty is well suited for dynamic background separation, outperforming classic rPCA methods.

Our second contribution is to use the design philosophy behind robust low-rank representation learning to develop a new formulation for robust clustering. We formulate classic spectral analysis as an optimization problem, and then modify this problem to be robust to outliers. The advantages are shown using a synthetic clustering example. We then combine robust spectral clustering with robust subspace clustering
to achieve superior performance on face recognition tasks, surpassing prior work without any data pre-processing; see Section~\ref{sec:LatentClustering}, Table~\ref{tb:faces}.
\section{New Penalties for Learning Robust Representations}
\label{sec:rPCA}
Many tasks in computer vision depend on unsupervised representation learning.  
A well-known example is background/foreground separation, 
 often solved by robust principal component analysis (rPCA). rPCA learns low-rank representations by decomposing a data matrix into a sum of low-rank and sparse components. The low-rank component represents the background and the sparse component represents the foreground~\cite{candes2011robust}. 
 
 In this section, we show that rPCA is equivalent to a  
 robust regression problem, and solving a Huber-robust regression~\cite{huber2011robust} for the background representation 
is completely equivalent to the full rPCA solution. 
We use this equivalence to design a new robust penalty (dubbed Tiber)
based on statistical descriptions of the signals of interest. We illustrate the benefits of using this new non-convex penalty for separating foreground from a dynamic background, using real datasets.  

\subsection{Huber in rPCA}
Background/foreground separation is widely used for detecting moving objects in videos from stationary cameras.
A broad range of techniques have been developed to tackle this task,
ranging from simple thresholding \cite{veit2005maximality} to mixtures of Gaussian models\cite{stauffer1999adaptive,evangelio2012splitting,haines2012background}.
In particular, rPCA has been widely adopted to solve this problem \cite{guyon2012robust,otazo2015low}.

\begin{figure}[t]
\begin{center}
\includegraphics[width=1.0\linewidth]{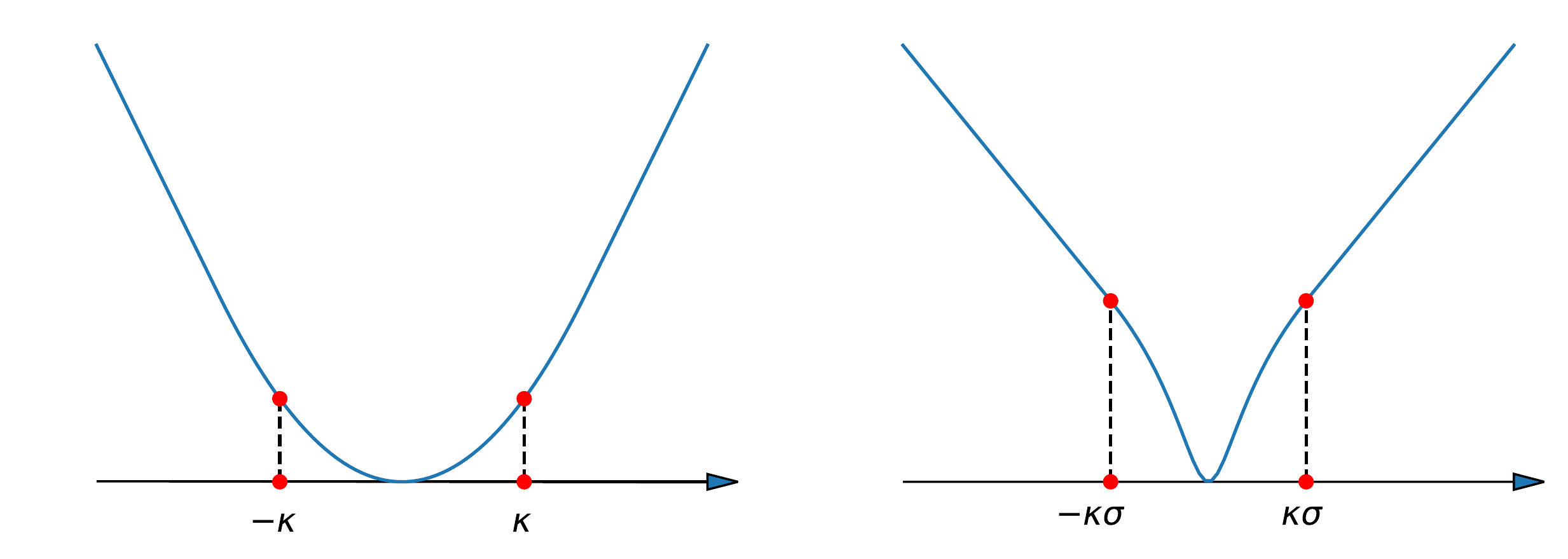}
\end{center}
   \caption{Robust penalties: left: Huber, right: Tiber. Both grow linearly outside an interval containing the origin. 
   The Tiber penalizes `mid-sized' errors within the region far more aggressively than the Huber; such a penalty
   must necessarily be non-convex. }
\label{fig:penalties}
\end{figure}

Denote a given video stream by $Y\in\R^{n \times m}$, 
where each of $m$ frames is reshaped to be a vector of size $n$.
There are many variants of rPCA~\cite{bouwmans2016handbook}. 
We use the {\it stable principal component pursuit} (SPCP) formulation:
\begin{equation}
\label{eq:rPCA}
\min_{L,S}\frac{1}{2}\|L+S-Y\|_F^2 + \kappa\|S\|_1 + \lambda\|L\|_*
\end{equation}
where $L$ represents the background, and  $S$ the foreground. The regularizations used by this formulation 
ensure that $L$ is chosen to be low rank, while $S$ is designed to be sparse; using a quadratic penalty on the 
residual fits of the data up to some error level. 

We can minimize over the variables in any order. Minimizing the first two summands of~\eqref{eq:rPCA} in $S$
gives a closed form function 
\[
\min_S \frac{1}{2}\|L+S-Y\|_F^2 + \kappa\|S\|_1 = \rho(L-Y; \kappa),
\] 
with $\rho(r;\kappa)$ the well-known Huber penalty~\cite{huber2011robust}
\begin{equation}
\label{eq:huber}
\rho(r;\kappa) = 
\begin{cases}
\kappa|r| - \kappa^2/2, & |r| > \kappa\\
r^2/2, & |r| \le \kappa
\end{cases}.
\end{equation}
We provide a simple statement of the following well-known result with a short self-contained proof. 
\begin{claim}
\label{claim:huber}
\begin{equation}
\label{eq:basic_huber}
\rho(r;\kappa) = \min_s\frac{1}{2}(s-r)^2 + \kappa|s|.
\end{equation}
\end{claim}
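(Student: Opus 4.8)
The plan is to exploit the convexity of the objective $f(s) = \frac{1}{2}(s-r)^2 + \kappa|s|$ in the scalar variable $s$. Since $f$ is the sum of a strictly convex quadratic and the convex absolute value, it is strictly convex and coercive, so the minimum is attained at a unique point $s^\star$ characterized by the first-order optimality condition $0 \in \partial f(s^\star)$. The only subtlety is the nondifferentiability at the origin, so I would use subgradient calculus: $\partial f(s) = (s-r) + \kappa\,\partial|s|$, where $\partial|s| = \{\operatorname{sign}(s)\}$ for $s \neq 0$ and $\partial|0| = [-1,1]$.

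Next I would solve the optimality condition case by case to identify $s^\star$ as the soft-thresholding operator. For $s > 0$ the condition gives $s = r - \kappa$, which is admissible precisely when $r > \kappa$; symmetrically $s = r + \kappa$ is admissible when $r < -\kappa$; and when $|r| \le \kappa$ the inclusion $0 \in -r + \kappa[-1,1]$ holds, forcing $s^\star = 0$. In compact form, $s^\star = \operatorname{sign}(r)\max(|r|-\kappa,\,0)$. By the evenness of both sides in $r$ one may assume $r \ge 0$ without loss of generality to halve the casework.

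Finally I would substitute $s^\star$ back into $f$ and verify the two regimes. When $|r| \le \kappa$, the optimal $s^\star = 0$ gives $f(s^\star) = r^2/2$; when $|r| > \kappa$, plugging in $s^\star = r - \kappa\operatorname{sign}(r)$ gives $(s^\star - r)^2 = \kappa^2$ and $\kappa|s^\star| = \kappa(|r|-\kappa)$, which sum to $\kappa|r| - \kappa^2/2$ after cancellation. These match the two branches of $\rho(r;\kappa)$ in~\eqref{eq:huber}, completing the proof. The computation is entirely routine; the one place demanding care is the subgradient treatment at $s=0$, which is exactly what produces the flat quadratic region of the Huber penalty and must not be glossed over by naive differentiation.
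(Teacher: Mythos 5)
Your proposal is correct and follows essentially the same route as the paper's proof: identify the minimizer as the soft-thresholding operator $\mathbb{S}_\kappa(r)$ and substitute it back to recover the two branches of the Huber penalty. The only difference is that you derive the soft-threshold formula from subgradient optimality and carry out the final substitution explicitly, whereas the paper cites the soft-threshold result and leaves the back-substitution arithmetic implicit.
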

\begin{proof}
The solution to this optimization problem is the {\it soft threshold} function
(see e.g.~\cite{combettes2011proximal})
\[
\arg\min_s\frac{1}{2}(s-r)^2 + \kappa|s| = \mathbb{S}_\kappa(r) =
\begin{cases}
r - \kappa, & r > \kappa\\
0, & |r| \le \kappa\\
r + \kappa, & r < -\kappa
\end{cases}.
\]
Plugging $ \mathbb{S}_\kappa(r)$ back into~\eqref{eq:basic_huber}, we have 
\[
\frac{1}{2}[\mathbb{S}_\kappa(r) - r]^2 + \kappa|\mathbb{S}_\kappa(r)| = \rho(r;\kappa).
\]
\end{proof}
The optimization problem is separable, so the result immediately extends to the vector case. 
Upon minimization over $S$, problem~\eqref{eq:rPCA} then reduces to 
\begin{equation}
\label{eq:rPCA_huber}
\min_{L} \rho(L-Y;\kappa) + \lambda\|L\|_*.
\end{equation}
To simplify the problem further, we use a factorized representation of $L$~\cite{burer2003nonlinear}, choosing the rank to be $k\ll \min(n,m)$
to obtaining the non-convex formulation
\begin{equation}
\label{eq:rPCA_huber_ncvx}
\min_{U,V} \rho(U\T V - Y;\kappa)
\end{equation}
where $U\in\R^{k\times n}$ and $V\in\R^{k\times m}$. 

\vskip 8pt\noindent
Comparing \eqref{eq:rPCA_huber_ncvx} to \eqref{eq:rPCA} we see two advantages:
\begin{enumerate}
\item The dimension of the decision variable has been reduced from $2nm$ to $k(n+m)$.
\item \eqref{eq:rPCA_huber_ncvx} is smooth, and does not require computing SVDs.
\end{enumerate}
Once we have $U$ and $V$, we can easily recover $L$ and $S$: 
\begin{align*}
L &= U\T V, \quad S = \mathbb{S}_\kappa(U\T V - Y).
\end{align*}
The approach is illustrated in the left panels of Figure~\ref{fig:rpca}.
Although the residual $U^TV-Y$ (shown in row 2) is noisy and not sparse, 
applying $\mathbb{S}_\kappa$ we get the sparse component (row 3), just 
as we would by solving the original formulation~\eqref{eq:rPCA}.

From a statistical perspective, the equivalence of rPCA and Huber means that 
the residual $R = U\T V -Y$, which contains both $S$ and random noise, 
can be modeled by a heavy tailed error distribution.
\begin{claim}
\label{claim:density}
Suppose $\{r_i(x)\}_{i=1}^l$ are i.i.d. samples from a distribution with density
\[p(r;\theta) = \frac{1}{n_c(\theta)}\exp[-\rho(r;\theta)]\]
where $n_c(\theta)= \int_\R \exp[-\rho(r;\theta)]\,dr$ is the normalization constant.
Then maximum likelihood formulation for $x$ is equivalent to the minimization problem
\[\min_{x} \sum_{i=1}^l\rho(r_i(x);\theta).\]
\end{claim}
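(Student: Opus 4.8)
The plan is to recognize this as the standard maximum-likelihood / M-estimation correspondence and to carry it out by passing to the negative log-likelihood. First I would write the joint density of the i.i.d.\ sample as a product, viewed as a function of the decision variable $x$ with the shape parameter $\theta$ held fixed:
\[
\mathcal{L}(x) = \prod_{i=1}^l p(r_i(x);\theta) = n_c(\theta)^{-l}\exp\!\Big[-\sum_{i=1}^l\rho(r_i(x);\theta)\Big].
\]
The maximum likelihood estimate is the maximizer of $\mathcal{L}(x)$ over $x$; the independence assumption is exactly what lets the joint density factor in this way.

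Next I would take the negative logarithm, using that $-\log$ is strictly decreasing so that maximizing $\mathcal{L}$ is equivalent to minimizing $-\log\mathcal{L}$:
\[
-\log\mathcal{L}(x) = l\log n_c(\theta) + \sum_{i=1}^l\rho(r_i(x);\theta).
\]
The key observation is that the term $l\log n_c(\theta)$ depends only on the fixed $\theta$ and not on $x$, so it is an additive constant in the minimization over $x$ and may be discarded. This gives
\[
\arg\max_x \mathcal{L}(x) = \arg\min_x -\log\mathcal{L}(x) = \arg\min_x \sum_{i=1}^l\rho(r_i(x);\theta),
\]
which is precisely the claimed equivalence.

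There is no genuine obstacle here; the only point requiring a word of care is that $p(\cdot;\theta)$ must be a bona fide density, i.e.\ that the normalizing integral $n_c(\theta)=\int_\R\exp[-\rho(r;\theta)]\,dr$ is finite so that the construction makes sense. This is guaranteed for the penalties of interest because both the Huber penalty of~\eqref{eq:huber} and the proposed Tiber grow at least linearly in $|r|$ for large $|r|$, so the integrand decays at least exponentially and the integral converges. Once finiteness is in hand, the remaining argument is purely algebraic: independence turns the joint density into a product, the logarithm turns the product into the sum $\sum_i\rho(r_i(x);\theta)$, and the $x$-independence of $n_c(\theta)$ removes it from the objective.
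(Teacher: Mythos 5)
Your proof is correct and follows exactly the route the paper intends: the paper dispatches this claim in one line (``follows immediately by taking the negative log of the maximum likelihood''), and your write-up simply fills in that same argument---factoring the joint density via independence, taking $-\log$, and dropping the $x$-independent term $l\log n_c(\theta)$. The added remark on finiteness of $n_c(\theta)$ is a reasonable extra precaution but does not change the approach.
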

The claim follows immediately by taking the negative log of the maximum likelihood. 
Claim~\ref{claim:density} means that solving \eqref{eq:rPCA_huber_ncvx} is equivalent to assuming that elements in $R = U\T V - Y$ are
 i.i.d.  samples from the Laplace density
\[
p(r;\kappa) = \frac{1}{n_c(\kappa)}\exp[-\rho(r;\kappa)].
\]
The function $\rho$ has linear tails (See Figure~\ref{fig:penalties}), which means this distribution is much more likely to produce large samples compared to the Gaussian.

\subsection{Weaknesses of the Huber}
Although the Huber distribution can detect sparse outliers, it does not model small errors well.
In many background/foreground separation problems, we must cope with a dynamic background (e.g. motion of tree leaves or water waves).
These small dynamic background perturbations correspond to motion 
we do not care about --- we are much more interested in detecting cars, people, and animals moving through the scene. 

We want to move these dynamics into the low-rank background term. However, the Huber is quadratic 
near the origin (i.e. nearly flat), so small perturbations do not significantly affect the objective value; 
and solving~\eqref{eq:rPCA_huber_ncvx} leaves these terms in the residual $R$. 
Thresholding these terms is either too aggressive (removing features we care about), or too lenient, leaving 
the dynamics in the foreground (see first two columns of Figure~\ref{fig:rpca}).
A better penalty would rise steeply for small values of $R$, without 
significantly affecting tail behavior.

\begin{figure*}[t]
\begin{center}
\includegraphics[width=0.33\linewidth]{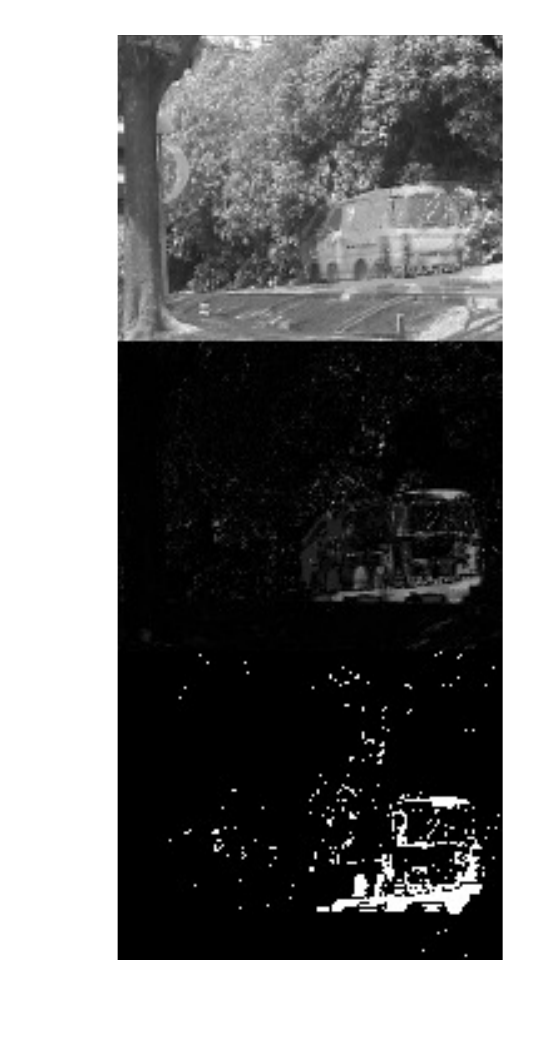}
\hspace{-.5in}
\includegraphics[width=0.33\linewidth]{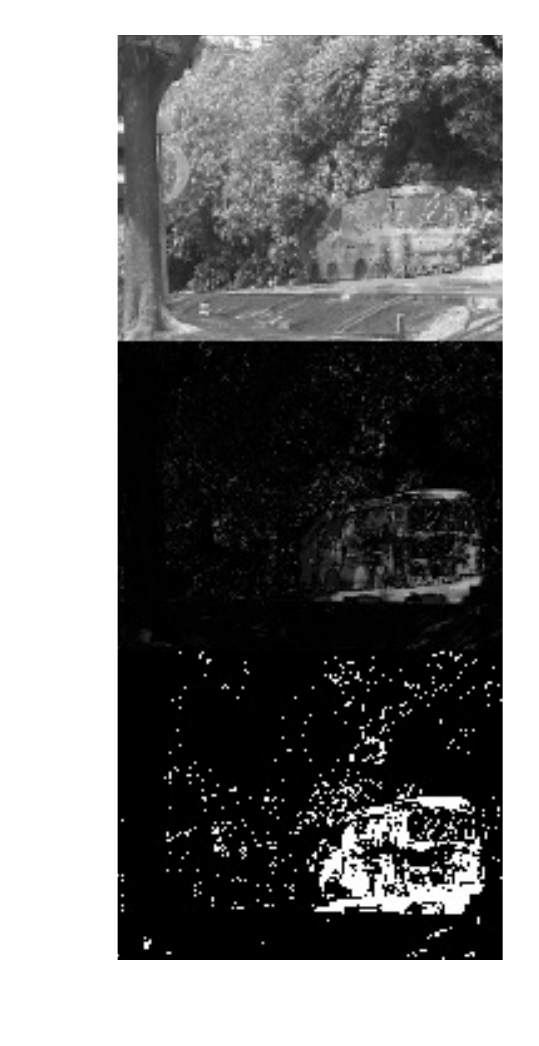}
\hspace{-.5in}
\includegraphics[width=0.33\linewidth]{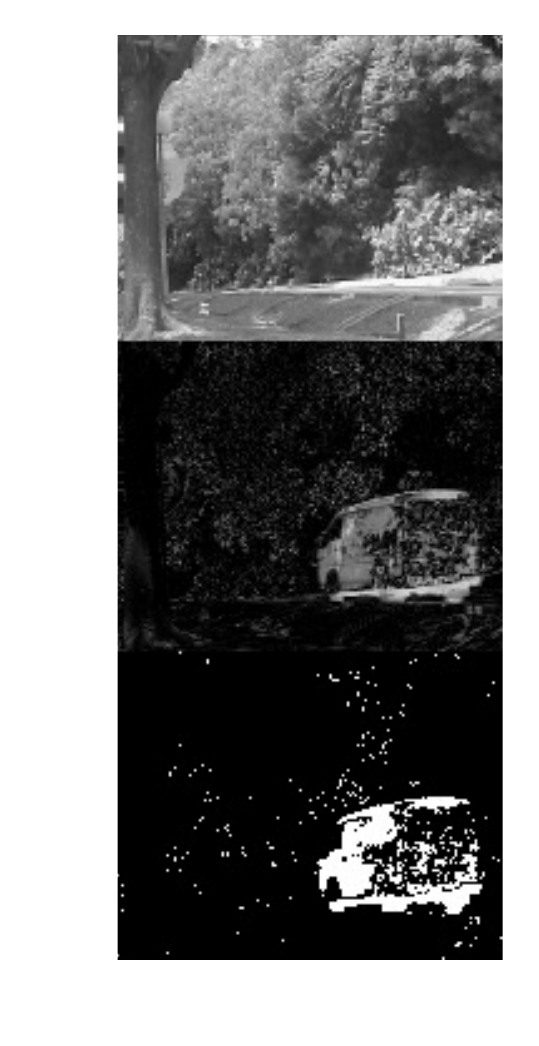}
\end{center}
   \caption{Left: Huber with $\kappa = 0.15$,
   middle: Huber with $\kappa = 0.1$,
   right: Tiber with $\kappa = 10, \sigma=0.03$.
   Row 1: low rank component $L$,
   row 2: residual $|R| = |U\T V-Y|$,
   row 3: binary plot for $S$. The Tiber recovers the van while avoiding the dynamic background.
   }
\label{fig:rpca}
\end{figure*}

\subsection{Tiber for rPCA}
We propose a new penalty, which we call the Tiber. While the Huber is defined by partially minimizing the sum of  
the 1-norm with a quadratic~\eqref{eq:huber}, the Tiber replaces the quadratic with a nonconvex function. 
The resulting penalty can match the tail behavior of Huber, yet have different properties around the origin (see Figure~\ref{fig:penalties}). Tiber is better suited for background/foreground separation problems with dynamic background.
We define the penalty as follows: 
\begin{equation}
\label{eq:tiber}
\begin{aligned}
&\rho_{\mathrm{T}}(r;[\kappa,\sigma]) =\\
&\begin{cases}
\frac{2\kappa}{\sigma(\kappa^2+1)}(|r|-\kappa\sigma) + \log(1+\kappa^2), & |r| > \kappa\sigma\\
\log(1+r^2/\sigma^2), & |r| \le \kappa\sigma
\end{cases}
\end{aligned}
\end{equation}
The Tiber is parametrized by thresholding parameter $\kappa$ and scale parameter $\sigma$.
Just as the Huber, it can be expressed as the value function of a minimization problem. 
We replace the quadratic penalty in Claim~\ref{claim:huber} by the smooth nonconvex penalty $\log(1+(\cdot)^2)$.
For simplicity, we use $\sigma = 1$ in the result below. 
\begin{claim}
\label{claim:tiber}
\begin{equation}
\label{eq:TiberChar}
\rho_{\mathrm{T}}(r;[\kappa,1]) = \min_s \log(1 + (s-r)^2) + \frac{2\kappa}{1+\kappa^2}|s|.
\end{equation}
\end{claim}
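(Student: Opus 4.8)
The plan is to mirror the proof of Claim~\ref{claim:huber}: the right-hand side of~\eqref{eq:TiberChar} is a one-dimensional proximal-type problem, so I would find its minimizer $s^\star(r)$ explicitly and substitute it back to recover the two-branch formula~\eqref{eq:tiber}. Write $c=\frac{2\kappa}{1+\kappa^2}$ and $g(s)=\log(1+(s-r)^2)+c|s|$. Since $g$ is invariant under $(s,r)\mapsto(-s,-r)$, I would assume $r\ge 0$ without loss of generality, and first argue that the minimizer satisfies $s^\star\ge 0$: for any $s<0$ both the log term (because $|s-r|>r$) and the penalty term strictly exceed their values at $s=0$, so $s=0$ already dominates the entire half-line $s<0$.

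On the open ray $s>0$ the objective is smooth, and I would impose stationarity $\frac{2(s-r)}{1+(s-r)^2}+c=0$. Substituting $u=s-r$ turns this into the quadratic $cu^2+2u+c=0$; using the identity $1-c^2=\frac{(1-\kappa^2)^2}{(1+\kappa^2)^2}$ its two roots are $u=-\kappa$ and $u=-1/\kappa$, i.e. $s=r-\kappa$ and $s=r-1/\kappa$. A sign check on $g'$ identifies $s=r-\kappa$ as a local minimum and $s=r-1/\kappa$ as a local maximum, while the remaining candidate is the kink point $s=0$, whose optimality on $[0,\infty)$ is governed by the subgradient condition there. This yields the expected dichotomy: for $0\le r\le\kappa$ the only admissible stationary point collapses to $s^\star=0$, whereas for $r>\kappa$ the interior critical point $s^\star=r-\kappa$ becomes available.

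Substituting then closes the computation. At $s^\star=0$ the value is $\log(1+r^2)$, matching the lower branch of~\eqref{eq:tiber}; at $s^\star=r-\kappa$ we have $(s^\star-r)^2=\kappa^2$ and $c|s^\star|=c(r-\kappa)$, giving $\log(1+\kappa^2)+\frac{2\kappa}{1+\kappa^2}(r-\kappa)$, exactly the upper branch. The two expressions agree at $r=\kappa$, so the recovered value function is continuous across the threshold.

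The hard part will be global optimality, and this is where the Tiber differs essentially from the (convex) Huber case. Because $\log(1+(\cdot)^2)$ is non-convex, a vanishing derivative only certifies a local minimum, so I must still verify $g(r-\kappa)\le g(0)$ for $r>\kappa$. This reduces to the sign of $D(r)=\log(1+r^2)-\log(1+\kappa^2)-c(r-\kappa)$, for which $D(\kappa)=0$ and $D'(\kappa)=0$ with $D''(\kappa)>0$, so the interior root wins just above threshold. The step I expect to fight with is the tail: the logarithmic growth of $g(0)$ is eventually outpaced by the linear term $c\,r$, so this comparison must be tracked carefully to delimit the range of $r$ on which $s^\star=r-\kappa$ is genuinely the global minimizer and the stated identity holds.
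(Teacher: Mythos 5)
Your proposal follows the same skeleton as the paper's proof of Claim~\ref{claim:tiber} (stationarity on each ray, identify the soft threshold $\mathbb{S}_\kappa(r)$, substitute back), but your execution is more careful, and that care exposes a real problem: the step you flag as the one you ``expect to fight with'' is not merely hard --- it fails, because the claim as stated is false for large $|r|$. Exactly as you suspect, for $r > \kappa$ the objective $g$ has two local minima on $[0,\infty)$: $s=0$, with value $\log(1+r^2)$ growing logarithmically in $r$, and the interior stationary point, whose value grows linearly in $r$; since a logarithm is eventually below any increasing affine function, $s=0$ is the global minimizer for all large $r$. Concretely, take $\kappa = 1/2$ (so $c = 4/5$) and $r = 10$: the claim asserts the minimum equals $\log(5/4) + (4/5)(9.5) \approx 7.82$, yet already $g(0) = \log(101) \approx 4.62$. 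The paper's proof conceals this by asserting that the objective is quasi-convex, which is false for large $r$ precisely because of the second root $s = r - 1/\kappa$ that you found and that the paper's ``iff'' silently discards: that root is the local maximum separating the two local minima. So the honest completion of your plan is a disproof, or a restricted restatement: for $\kappa < 1$ the identity \eqref{eq:TiberChar} holds exactly on the bounded interval $|r| \le r^*(\kappa)$, where $r^* > 1/\kappa$ is the unique solution of $\log(1+r^2) = \log(1+\kappa^2) + c(r-\kappa)$ beyond $1/\kappa$; what survives for all $r$ is only the local statement that $\mathbb{S}_\kappa(r)$ is a stationary point whose objective value equals $\rho_{\mathrm{T}}(r;[\kappa,1])$.

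One further caveat in your own write-up: the sign check identifying $s = r-\kappa$ as the local minimum and $s = r-1/\kappa$ as the local maximum, and likewise your assertion $D''(\kappa) > 0$, are valid only when $\kappa < 1$. The coefficient $c = \frac{2\kappa}{1+\kappa^2}$ is invariant under $\kappa \mapsto 1/\kappa$, so the right-hand side of \eqref{eq:TiberChar} cannot distinguish $\kappa$ from $1/\kappa$; when $\kappa > 1$ the two stationary points swap roles, the interior local minimum sits at $s = r - 1/\kappa$, and the identity fails immediately above the threshold (e.g.\ $\kappa = 2$, $r = 3$: claimed value $0.8 + \log 5 \approx 2.41$, but $g(2.5) = \log(5/4) + 2 \approx 2.22$). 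Any corrected version of the claim must therefore also assume $\kappa \le 1$, or replace $\kappa$ by $\min(\kappa, 1/\kappa)$ throughout.
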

\begin{proof}
Denote the objective function in~\eqref{eq:TiberChar} by $f(s)$.
It is easy to check that $f$ is quasi-convex in $s$ when $\kappa \ge 0$.
We look to local optimality conditions to understand the structure of the minimizers.  
\begin{itemize}
\item Suppose $s^* > 0$. Then  $0 = f'(s^*)$ means
\begin{align*}
0=\frac{2(s^*-r)}{1+(s^*-r)^2} + \frac{2\kappa}{1+\kappa^2} & \iff s^* = r - \kappa;
\end{align*}
this requires $r > \kappa$.
\item Suppose $s^*<0$. Then $0 = f'(s^*)$ means
\begin{align*}
0=\frac{2(s^*-r)}{1+(s^*-r)^2} + \frac{-2\kappa}{1+\kappa^2} & \iff s^* = r + \kappa;
\end{align*}
this requires $r < -\kappa$.
\item otherwise $s^* = 0$.
\end{itemize}
Therefore $s^* = \mathbb{S}_\kappa(r)$. Plugging this into~\eqref{eq:TiberChar}, we have
\begin{align*}
\rho_{\mathrm{T}}(r;[\kappa,1]) &=  \log(1 + (\mathbb{S}_\kappa(r)-r)^2) + \frac{2\kappa}{1+\kappa^2}| \mathbb{S}_\kappa(r)|.
\end{align*}
\end{proof}
In Figure~\ref{fig:penalties}, we see that Tiber rises steeply near the origin. 
This behavior discourages dynamic terms (leaves, waves) in $R$, forcing them to be fit 
by $U\T V$.
The new Tiber-robust rPCA problem is given by:
\begin{equation}
\label{eq:rPCA_tiber}
\min_{U,V} \rho_{\mathrm{T}}(U\T V - Y;[\kappa,\sigma])
\end{equation}
which also has all of the advantages of \eqref{eq:rPCA_huber_ncvx}.
Moreover, because of the characterization from Claim~\ref{claim:tiber}, once we solve~\eqref{eq:rPCA_tiber},
we immediately recover $L$ and $S$:
\begin{align*}
L &= U\T V, \quad S = \mathbb{S}_{\kappa\sigma}(U\T V - Y).
\end{align*}

\subsection{Experiment: Foreground Separation}
We use a publicly available data set\footnote{Downloaded from \url{http://vis-www.cs.umass.edu/~narayana/castanza/I2Rdataset/}} with a dynamic background (moving trees). We sample 102 data frames from this data set, convert them to grey scale, and reshape them as column vectors of matrix $Y\in\R^{20480\times102}$.
We compare formulations \eqref{eq:rPCA_huber_ncvx} and \eqref{eq:rPCA_tiber}.
Proximal alternating linearized minimization algorithm (PALM) \cite{bolte2014proximal} was used to solve all of the optimization problems.

Rank of $U$ and $V$ was set to be $k=10$ for all experiments.
We manually tuned parameters to achieve the best possible recovery in each formulation. 
For Huber, we selected two nearby $\kappa$ values, $\kappa = 0.15$ and $\kappa = 0.1$; for Tiber, we selected $\kappa = 10$ and $\sigma = 0.03$, resulting in the threshold parameter $\kappa\sigma = 0.3$. 

The results are shown in Figure~\ref{fig:rpca}. The task is identifying the van while avoiding interference from moving leaves.
The Huber is unable to separate the van from the leaves for any threshold values $\kappa$.
When we choose $\kappa = 0.15$ (left panel in Figure~\ref{fig:rpca}), we cut out too much information, giving an
incomplete van in $S$. If we make a less conservative choice $\kappa = 0.1$ (middle panel in Figure~\ref{fig:rpca}), 
we leave too much dynamic noise in $S$, which obscures the van. 

The Tiber Penalty obtains a cleaner picture of the moving vehicle (right panel in Figure~\ref{fig:rpca}). 
As expected, it forces more of the dynamic background to be fit by $U\T V$, leaving a fairly complete 
van in $S$ without too much contamination.
\section{Robust Representation Learning for Clustering}
\label{sec:LatentClustering}

Centroid-based clustering, e.g. k-Means, is a standard tool to partition and summarize datasets. 
Given the high dimensionality and complexity of data in computer vision applications, 
it is necessary to learn latent representations,
such as the underlying metric, prior to clustering.
Clustering is then performed in the latent space. 

We develop an approach for robust spectral clustering.
We illustrate the advantages using a synthetic dataset, and then combine the approach with robust subspace clustering 
to achieve perfect performance on face recognition tasks. 
\subsection{Spectral Clustering}
\label{sec:spectral}
\begin{figure}[t]
\begin{center}
\includegraphics[width=0.7\linewidth]{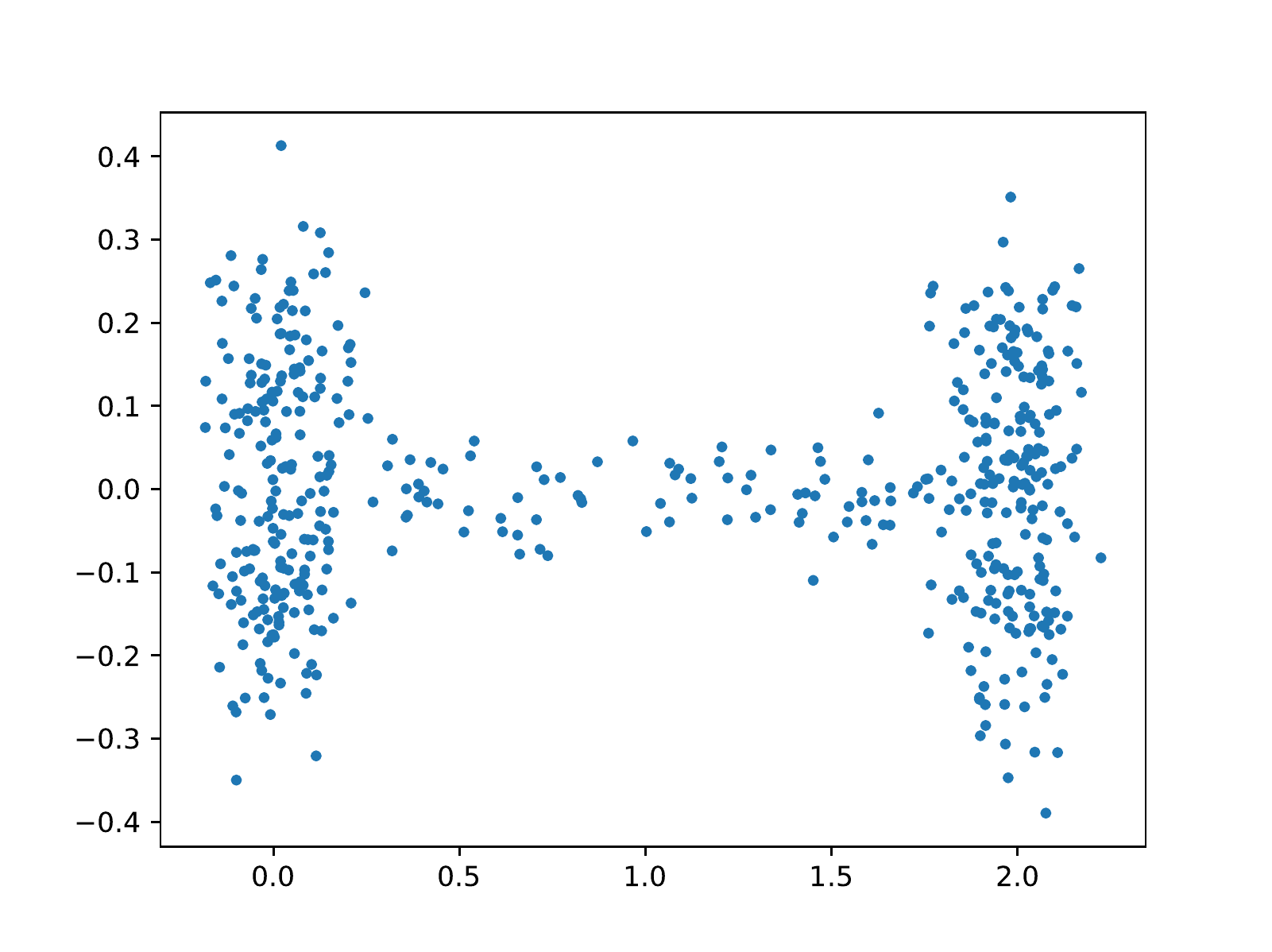}
\includegraphics[width=0.7\linewidth]{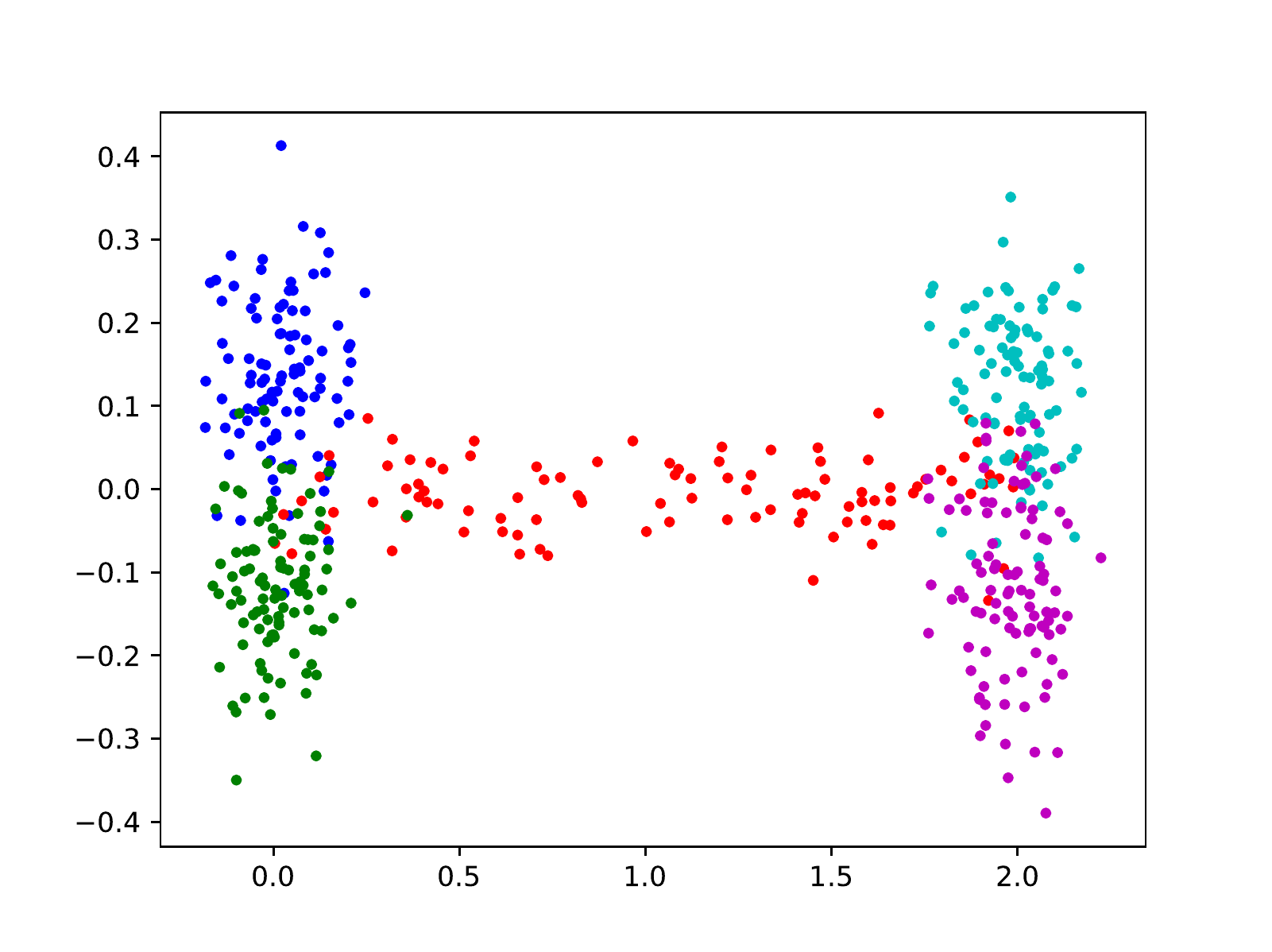}
\end{center}
   \caption{Synthetic Data Clustering: Up: data without labels, Down: data with true colors.}
\label{fig:synthetic}
\end{figure}
\begin{figure}[h]
\begin{center}
\includegraphics[width=0.7\linewidth]{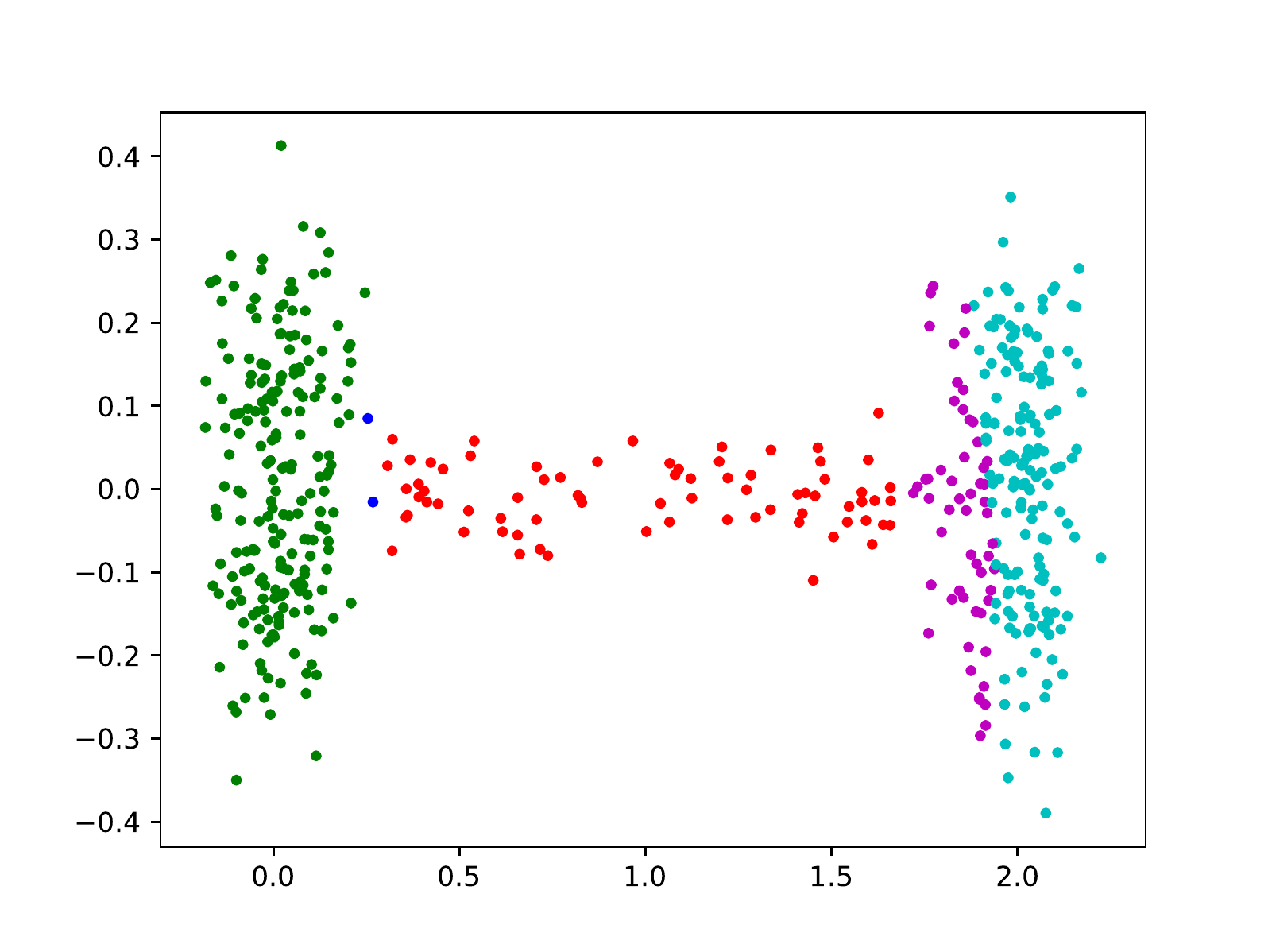}
\includegraphics[width=0.7\linewidth]{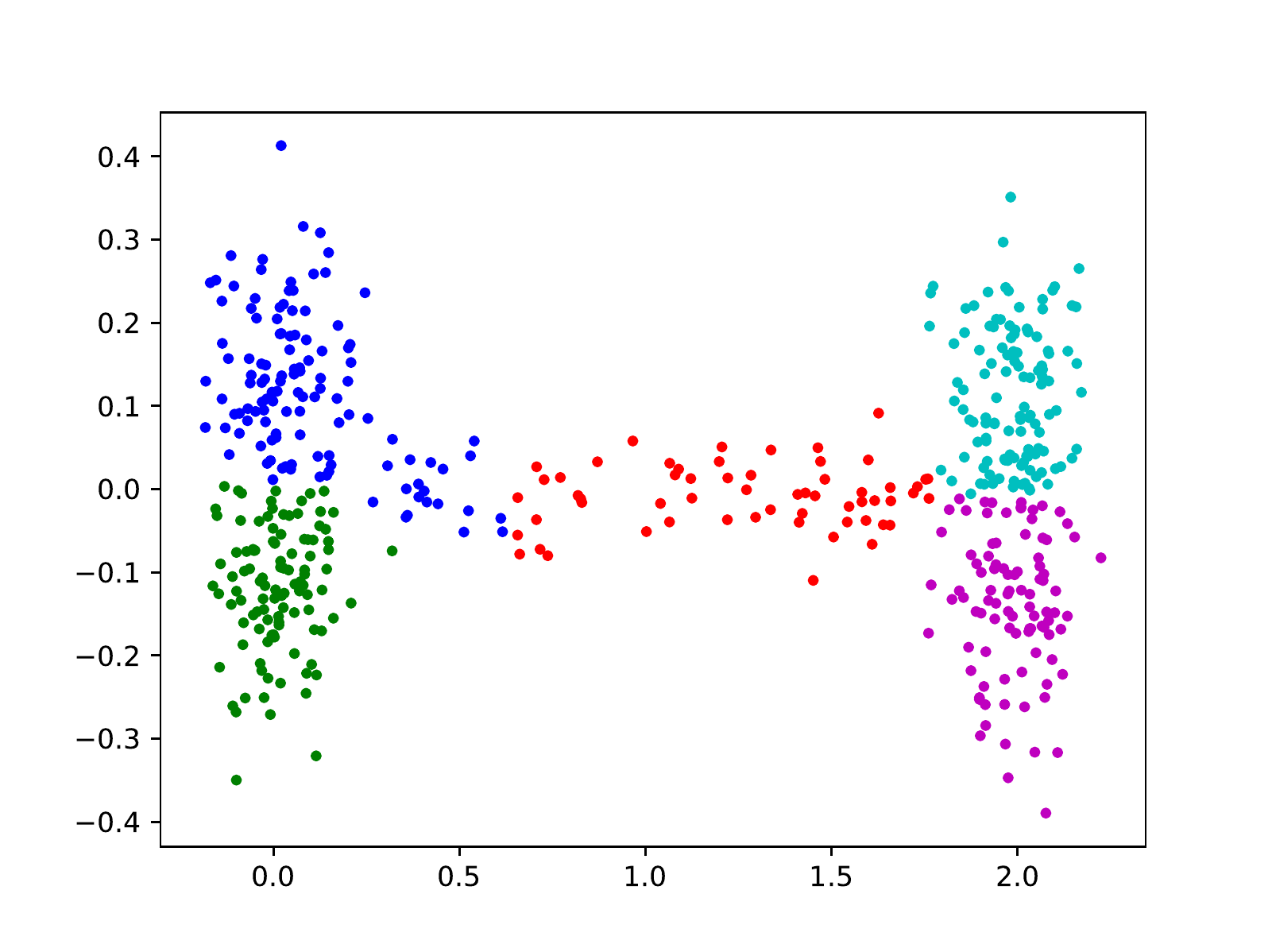}
\end{center}
   \caption{Synthetic Data Clustering: Up: result from eigenvalue decomposition, Down: result from \eqref{eq:symmf_huber}.}
\label{fig:synthetic_result}
\end{figure}

Spectral clustering \cite{ng2002spectral} is formulated as follows. Given $m$ datapoints $y_i \in \mathbb{R}^n$,
we arrange them in a matrix $Y\in\R^{n \times m}$.
To partition the data into $k$ groups, spectral clustering uses the following steps: 
\begin{enumerate}
\item Given a dataset of $m$ samples, we construct the similarity matrix $L\in\R^{m\times m}$ of the data points.
\item Extract the eigenvectors $X\in\R^{m \times k}$ of $L$ corresponding to the $k$ largest eigenvalues.
\item Project each row of $X$ onto the unit ball, and apply distance-based clustering.
\end{enumerate}
Finding a meaningful similarity matrix $L$ is crucial to the success of spectral clustering. Ideally, 
$L$ will be a block diagonal matrix with $k$ blocks. This rarely happens for real applications; 
even when underlying structure in $L$ is present, it can be obscured by noise and a small number of 
points that don't follow the general pattern.

To find a factorization of noisy $L$, we need a robust method for eigenvalue decomposition. 
We first formulate eigenvalue decomposition as an optimization problem.
\begin{claim}
Assume $L$ is a symmetric matrix with eigenvalues less than or equal to 1.
Then the solution to the problem 
\begin{equation}
\label{eq:symmf}
\begin{aligned}
\min_X~&\frac{1}{2}\|XX\T - L\|_F^2\\
\text{s.t.}~&X\T X = I_{k}
\end{aligned}
\end{equation}
is $X = [v_1, \dots, v_k]$ with $v_i$ the eigenvector corresponding to the 
 $i^\text{th}$ largest eigenvalue of $L$, and $I_k$ the $k$ by $k$ identity matrix.
\end{claim}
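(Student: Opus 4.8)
The plan is to reduce this constrained matrix least-squares problem to a standard trace-maximization over the Stiefel manifold $\{X : X\T X = I_k\}$, and then identify its maximizer with the top eigenvectors of $L$. First I would expand the objective using the trace:
\[
\half\|XX\T - L\|_F^2 = \half\trace(XX\T XX\T) - \trace(XX\T L) + \half\|L\|_F^2.
\]
Invoking the constraint $X\T X = I_k$, the first term collapses to $\half\trace(XX\T) = \half\,\trace(X\T X) = \half k$, a constant, and $\half\|L\|_F^2$ is also constant; meanwhile $\trace(XX\T L) = \trace(X\T L X)$ by cyclicity. Hence minimizing the original objective is \emph{equivalent} to maximizing $\trace(X\T L X)$ subject to $X\T X = I_k$.

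Next I would characterize the stationary points of this reduced problem via Lagrange multipliers. Introducing a symmetric multiplier matrix $M$ for the orthogonality constraint and differentiating the Lagrangian $\trace(X\T L X) - \trace\!\big(M(X\T X - I_k)\big)$ gives the first-order condition $LX = XM$. Diagonalizing $M = QDQ\T$ with $Q$ orthogonal shows that the columns of $XQ$ are eigenvectors of $L$; therefore the column space of any critical $X$ is spanned by some selection of $k$ eigenvectors of $L$. Evaluating the reduced objective at such a point yields $\trace(X\T L X) = \sum_{i\in S}\lambda_i$, the sum of the selected eigenvalues.

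Finally I would compare objective values across all critical points: the sum $\sum_{i\in S}\lambda_i$ is maximized precisely by choosing $S$ to be the indices of the $k$ largest eigenvalues, giving $X = [v_1,\dots,v_k]$ and establishing global optimality. Equivalently, one may cite the Ky Fan maximum principle to skip the critical-point enumeration.

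The most delicate step is establishing \emph{global} rather than merely local optimality, together with the inherent non-uniqueness of the minimizer: the objective depends on $X$ only through $XX\T$, so it is invariant under $X \mapsto XQ$ for any orthogonal $Q\in\R^{k\times k}$, and within a repeated eigenvalue any orthonormal basis of the eigenspace is equally optimal. The statement should thus be read as identifying the optimal \emph{column space} (the top-$k$ eigenspace), with $[v_1,\dots,v_k]$ serving as one representative. I would also remark that the hypothesis of eigenvalues at most $1$ is not actually used in the maximization argument; it is natural context reflecting that $L$ is being approximated by the rank-$k$ orthogonal projection $XX\T$, whose nonzero eigenvalues equal $1$, and it makes the factorization interpretation meaningful.
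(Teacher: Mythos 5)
Your proof is correct, and it departs from the paper's argument at exactly the step that matters. Both proofs begin the same way: expand the square, note that $\half\|XX\T\|_F^2 = k/2$ and $\half\|L\|_F^2$ are constants under the constraint $X\T X = I_k$, and reduce the problem to maximizing $\ip{XX\T, L} = \trace(X\T L X)$ over the Stiefel manifold. At that point the paper appeals to the Cauchy--Schwarz inequality $\ip{XX\T,L} \le \|XX\T\|_F\|L\|_F$ and asserts equality when $XX\T$ and $L$ ``share the same singular vectors.'' That assertion is not valid: equality in Cauchy--Schwarz requires proportionality $XX\T = cL$, which a rank-$k$ orthogonal projector generically cannot achieve. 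Concretely, in the paper's notation ($\Lambda$ the diagonal matrix of eigenvalues, $D$ the diagonal matrix with $k$ ones and $m-k$ zeros), its lower bound $\half\|D\|_F^2 - \|D\|_F\|\Lambda\|_F + \half\|\Lambda\|_F^2$ lies strictly below the value $\half k - \sum_{i=1}^k \lambda_i + \half\|\Lambda\|_F^2$ attained at $X = [v_1,\dots,v_k]$ unless $\Lambda$ is proportional to $D$, so the paper's chain of inequalities does not by itself certify that this $X$ is optimal. Your replacement for this step --- the stationarity condition $LX = XM$ with $M = X\T L X$ symmetric, diagonalization of $M$ to show every critical point spans a selection of $k$ eigenvectors with critical value $\sum_{i\in S}\lambda_i$, then comparison over the finitely many selections (or, in one line, the Ky Fan / von Neumann trace inequality $\trace(X\T L X) \le \sum_{i=1}^k \lambda_i$) --- is the tight bound the problem actually requires, and it closes this gap. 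One point you should make explicit: global optimality follows because the Stiefel manifold is compact, so the maximum of the continuous reduced objective is attained and must occur at one of your critical points. Your side remarks are also accurate: the objective depends on $X$ only through $XX\T$, so the minimizer is determined only up to $X \mapsto XQ$ and up to basis choice inside repeated eigenspaces, and the hypothesis that the eigenvalues are at most $1$ plays no role in either argument.
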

\begin{proof}
Since $L$ is a symmetric matrix, it has a eigenvalue decomposition,
\[L = Y\Lambda Y\T\]
where $Y\in\R^{m\times m}$ is orthogonal and $\Lambda$ is diagonal,
with $1\ge\lambda_1\ge\lambda_2\ge\ldots\ge\lambda_m$. Similarly, we have 
\[XX\T = \tilde{X}D\tilde{X}\T\]
where $\tilde{X}\in\R^{m\times m}$ is a orthogonal matrix whose first $k$ columns agree with those of $X$,
$D$ is a diagonal matrix with first $k$ elements on the diagonal are 1 and the rest are 0.
From the Cauchy-€"Schwarz inequality, we have
\[\trace(XX\T\cdot L) = \ip{XX\T,L} \le \|XX\T\|_F\cdot\|L\|_F\]
where equality hold when $XX\T$ and $L$ share the same singular vectors, i.e.,
$X$ equals to the first $k$ columns of $Y$.\\
Therefore
\begin{align*}
\frac{1}{2}\|XX\T - L\|_F^2 &= \frac{1}{2}\|XX\T\|_F^2 - \ip{XX\T,L} + \frac{1}{2}\|L\|_F^2\\
&\ge\frac{1}{2}\|D\|_F^2 - \|D\|_F\|\Lambda\|_F + \frac{1}{2}\|\Lambda\|_F^2
\end{align*}
with equality hold when columns of $X$ are eigenvectors corresponding to the largest $k$ eigenvalues.
\end{proof}

We robustify \eqref{eq:symmf} by replacing the Frobenius norm in the optimization formulation by  
the Huber function (or another robust penalty):
\begin{equation}
\label{eq:symmf_huber}
\begin{aligned}
\min_X~&\rho(XX\T - L;\kappa)\\
\text{s.t.}~&X\T X = I_{k}
\end{aligned}.
\end{equation}
This approach can be very effective. Consider the following clustering experiment with $n=2$, $m = 500$, and 
$k=5$. 
We generate five clusters (sampling from four 2-D Gaussians, one rectangular uniform distribution) 
with 100 points per group. To make the problem challenging, we move the clusters close together so much that trying to tell them apart with the naked eye is hard~(Figure~\ref{fig:synthetic}, top). 
True clusters appear in Figure~\ref{fig:synthetic}, bottom. 

Classic spectral clustering, which uses eigenvalue decomposition in step 2, fails to detect the true relationships~(Figure~\ref{fig:synthetic_result}, top).
Robust spectral clustering using the Huber penalty~\eqref{eq:symmf_huber} 
does a much better job (Figure~\ref{fig:synthetic_result}, bottom).

\subsection{Subspace Clustering}
Subspace clustering looks for low dimensional representation of high dimensional data,
by grouping the points along low-dimensional subspaces. 
Given a data matrix $Y\in\R^{n\times m}$ as in Section~\ref{sec:spectral},
the optimization for subspace clustering is given by~\cite{elhamifar2013sparse}:
\begin{equation}
\label{eq:subspace}
\begin{aligned}
\min_C~&\frac{1}{2}\|Y - YC\|_F^2 + \lambda\|C\|_1 \;\; \text{s.t.} \;\; \diag(C) = 0.
\end{aligned}
\end{equation}
This formulation looks for a sparse representation of the dataset by its members:
\(s_i = Sc_i\).
To avoid the trivial solution, we require the diagonal of $C$ to be identically 0. After obtaining $C$, it is post-processed and a similarity matrix is constructed as $W = |C| + |C\T|$. $W$ will be ideally close to block-diagonal, where each block represents a subspace, and spectral clustering is performed it to identify cluster memberships.

Outliers in the dataset can break the performance of \eqref{eq:subspace}.
To make the approach robust,~\cite{elhamifar2013sparse} uses the formulation 
\begin{equation}
\label{eq:subspace_robust}
\begin{aligned}
\min_C~&\frac{1}{2}\|Y - YC - S\|_F^2 + \lambda\|C\|_1 + \kappa\|S\|_1\\
\text{s.t.}~&\diag(C) = 0.
\end{aligned}
\end{equation}
Using Claim~\ref{claim:huber},
we rewrite \eqref{eq:subspace_robust} using Huber:
\begin{equation}
\label{eq:subspace_huber}
\begin{aligned}
\min_C~&\rho(Y - YC;\kappa) + \lambda\|C\|_1 \;\; \text{s.t.}\;\; \diag(C) = 0.
\end{aligned}
\end{equation}
Formulation~\eqref{eq:subspace_huber} has the same advantages with respect to~\eqref{eq:subspace_robust} as \eqref{eq:rPCA_huber_ncvx} 
has with respect to \eqref{eq:rPCA}.

\subsection{Face Clustering}
Given multiple face images taken at different conditions, the goal of face clustering~\cite{elhamifar2013sparse} is to identify images that belong to the same person.

\begin{figure}[h]
\begin{center}
\includegraphics[width=1\linewidth]{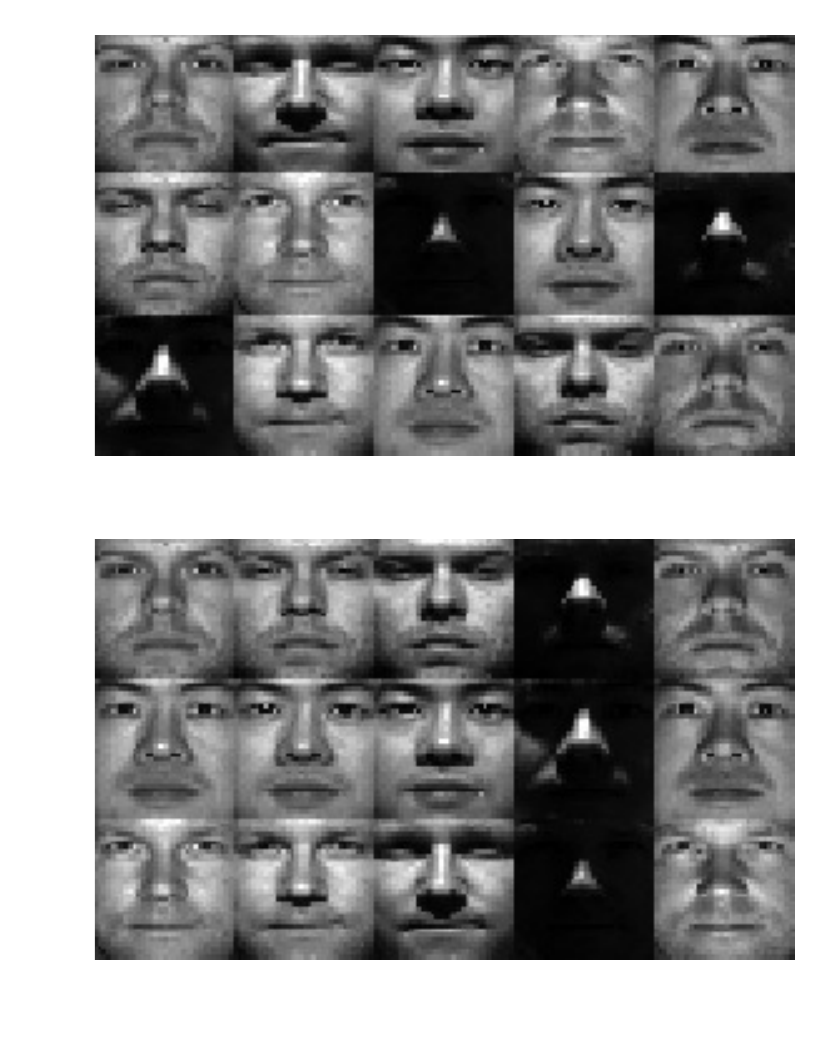}
\end{center}
   \caption{Faces data: top: randomly chosen face images, bottom: faces after clustering; each row belongs to a cluster.}
\label{fig:faces}
\end{figure}

We use images from the publicly available Extended Yale B dataset \cite{lee2005acquiring} 
\footnote{Downloaded from \url{http://www.cad.zju.edu.cn/home/dengcai/Data/FaceData.html}}.
Each image has $32\times 32$ pixels, and there are 2414 images in the dataset.
These images belong to 38 people, with approximately 64 pictures per person. 

Under the Lambertian assumption, pictures obtained from one person under different illuminations 
should lie close to a 9 dimensional subspace~\cite{basri2003lambertian}.
In practice, these spaces are hard to detect because of noise in the images, and 
a robust approach is required.

\noindent{\bf Robust subspace clustering for face images}:
\vspace{-.1in}
\begin{enumerate}
\item Obtain sparse representation $C$ using \eqref{eq:subspace_huber}.
\item Construct similarity matrix $W$ from $C$.
\vspace{-.1in}
\begin{itemize}
\item Normalize columns of $C$ to have maximum absolute value no larger than 1. 
\item Form $W = |C| + |C\T|$
\item Normalize $W$: $W \leftarrow D^{-1/2}WD^{-1/2}$, where $D$ is a diagonal matrix with $D_{ii} = \sum_j W_{ij}$.
\end{itemize}

\item Apply spectral clustering using $W$.
\vspace{-.1in}
\begin{itemize}
\item Apply robust symmetric factorization \eqref{eq:symmf_huber} to $W$, to obtain the latent representation $X$.
\item Project each row of $X$ onto the unit 2-norm ball.
\item Apply K-means algorithm to the new rows of $X$.
\end{itemize}
\end{enumerate}

The results are shown in Table~\ref{tb:faces}. We implement the approach for different numbers of subjects $k = 2,3,5,8$.
We show the parameters $\kappa$ and $\lambda$ in \eqref{eq:subspace_huber} used to achieve 
the high accuracies given in Table~\ref{tb:faces}\footnote{In \cite{elhamifar2013sparse}, the images used are of size $48 \times 42$. The numbers shown are therefore indicative.}.
\begin{table}[h]
\begin{center}
\begin{tabular}{|c|c|c|c|c|}
\hline
clusters & $\kappa$ in \eqref{eq:subspace_huber} & $\lambda$ in 
\eqref{eq:subspace_huber} & error & error in \cite{elhamifar2013sparse}\\
\hline\hline
$k=2$ & 0.5 & 1 & 0.00\%& 1.86\%\\
$k=3$ & 0.1 & 0.7 & 0.00\%& 3.10\%\\
$k = 5$ & 0.05 & 0.7 & 0.00\% & 4.31\%\\
$k = 8$ & 0.03 & 0.5 & 2.73\% & 5.85\%\\
\hline
\end{tabular}
\end{center}
\caption{\label{tb:faces} Results for robust subspace clustering with face images.}
\end{table}

To get better intuition of the method, we plot the similarity matrix 
corresponding to $k = 3$ in Figure~\ref{fig:similarity}.
We can clearly see three blocks along the diagonal that correspond to the three face clusters.
\begin{figure}[h]
\begin{center}
\includegraphics[width=0.8\linewidth]{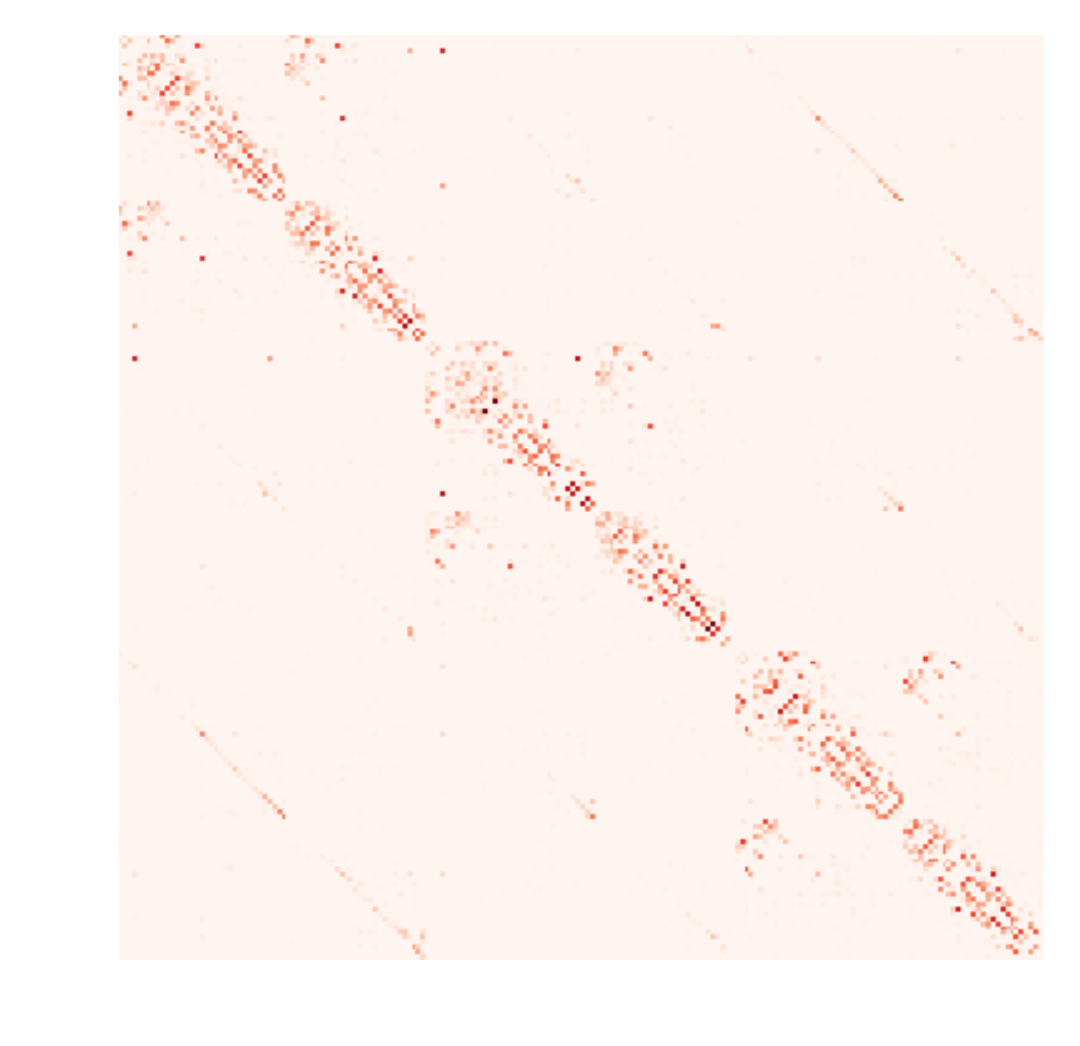}
\end{center}
   \caption{Similarity matrix for face images clustering with $k=3$; the matrix is nearly block diagonal with 3 blocks.}
\label{fig:similarity}
\end{figure}
The resulting projected $X$ obtained from the eigenvalue decomposition of similarity matrix $W$ are shown in Figure~\ref{fig:spectrum}. The three clusters are clearly well separated. 
\begin{figure}[h]
\begin{center}
\includegraphics[width=0.9\linewidth]{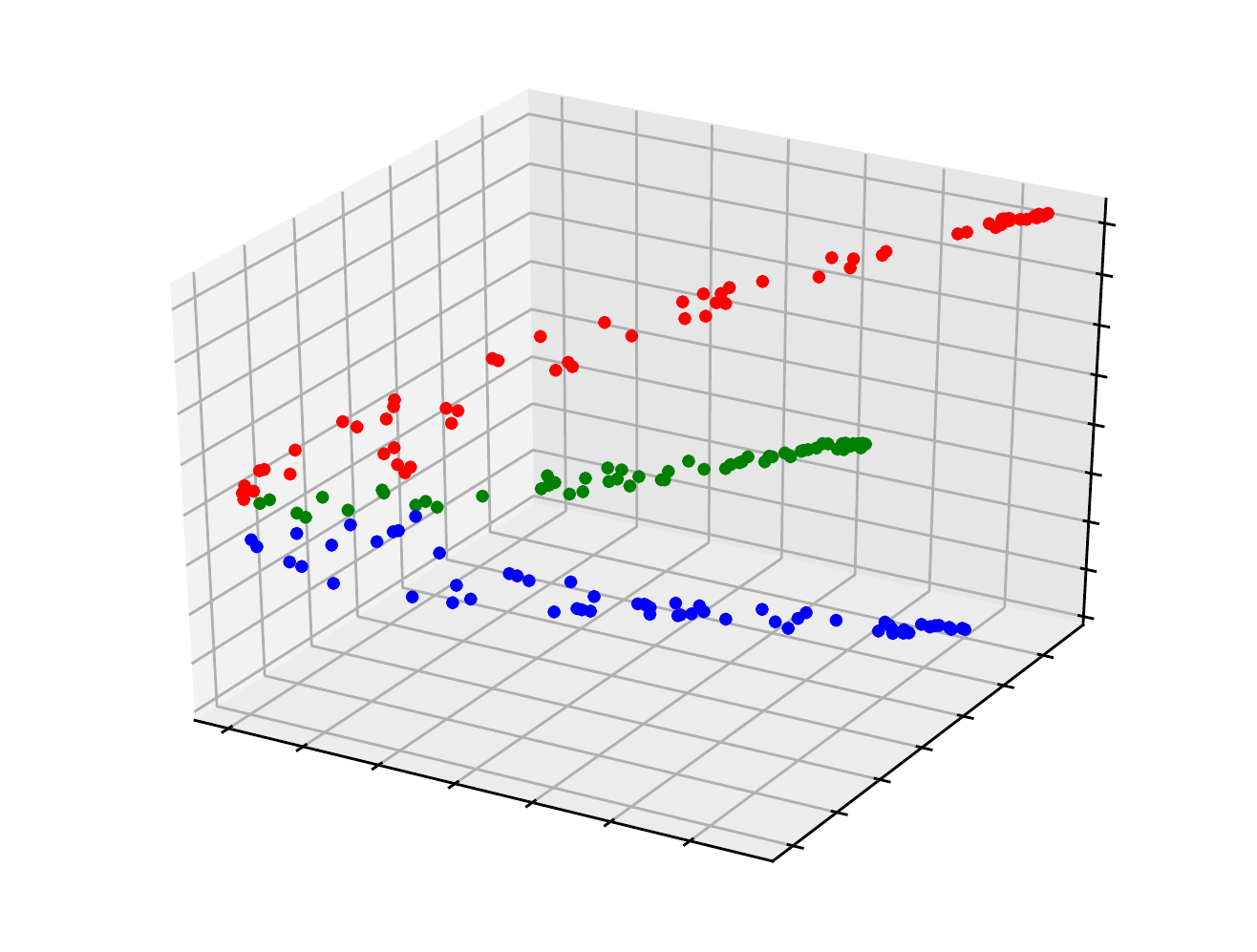}
\end{center}
   \caption{Projections of the rows of $X$ onto the eigenspace of the similarity matrix for $k=3$.
   Each color represent the face images of a single person.}
\label{fig:spectrum}
\end{figure}
The final algorithm has perfect accuracy in this example.
\section{Discussion}

 Robust approaches are essential for unsupervised learning, and can be designed  
 using optimization formulations. For example, in both 
 rPCA and robust spectral learning, SVD and eigenvalue decomposition are 
 first characterized using optimization, then reformulated with robust losses.  
 
 Several tasks in this approach are difficult. First, there is a need to tune parameters
 in the optimization formulations. For example, the Tiber depends on two parameters,
 $\kappa$ and $\sigma$. Automatic ways to tune these parameters can make 
 robust unsupervised learning a lot more portable. Second, the optimization problems 
we have to solve are large-scale; time required for robust subspace clustering for images scales 
non-linearly with both the number and size of images. Designing non-smooth stochastic algorithms 
that take the structure of these problems into account is essential. 
{\small
\bibliographystyle{IEEEtran}
\bibliography{egbib}

\begin{thebibliography}{10}
\providecommand{\url}[1]{#1}
\csname url@samestyle\endcsname
\providecommand{\newblock}{\relax}
\providecommand{\bibinfo}[2]{#2}
\providecommand{\BIBentrySTDinterwordspacing}{\spaceskip=0pt\relax}
\providecommand{\BIBentryALTinterwordstretchfactor}{4}
\providecommand{\BIBentryALTinterwordspacing}{\spaceskip=\fontdimen2\font plus
\BIBentryALTinterwordstretchfactor\fontdimen3\font minus
  \fontdimen4\font\relax}
\providecommand{\BIBforeignlanguage}[2]{{%
\expandafter\ifx\csname l@#1\endcsname\relax
\typeout{** WARNING: IEEEtran.bst: No hyphenation pattern has been}%
\typeout{** loaded for the language `#1'. Using the pattern for}%
\typeout{** the default language instead.}%
\else
\language=\csname l@#1\endcsname
\fi
#2}}
\providecommand{\BIBdecl}{\relax}
\BIBdecl

\bibitem{lecun2015deep}
Y.~LeCun, Y.~Bengio, and G.~Hinton, ``Deep learning,'' \emph{Nature}, vol. 521,
  no. 7553, pp. 436--444, 2015.

\bibitem{schmidhuber2015deep}
J.~Schmidhuber, ``Deep learning in neural networks: An overview,'' \emph{Neural
  networks}, vol.~61, pp. 85--117, 2015.

\bibitem{vincent2010stacked}
P.~Vincent, H.~Larochelle, I.~Lajoie, Y.~Bengio, and P.-A. Manzagol, ``Stacked
  denoising autoencoders: Learning useful representations in a deep network
  with a local denoising criterion,'' \emph{Journal of Machine Learning
  Research}, vol.~11, no. Dec, pp. 3371--3408, 2010.

\bibitem{xie2012image}
J.~Xie, L.~Xu, and E.~Chen, ``Image denoising and inpainting with deep neural
  networks,'' in \emph{Advances in Neural Information Processing Systems},
  2012, pp. 341--349.

\bibitem{karpathy2014large}
A.~Karpathy, G.~Toderici, S.~Shetty, T.~Leung, R.~Sukthankar, and L.~Fei-Fei,
  ``Large-scale video classification with convolutional neural networks,'' in
  \emph{Proceedings of the IEEE conference on Computer Vision and Pattern
  Recognition}, 2014, pp. 1725--1732.

\bibitem{goodfellow2014generative}
I.~Goodfellow, J.~Pouget-Abadie, M.~Mirza, B.~Xu, D.~Warde-Farley, S.~Ozair,
  A.~Courville, and Y.~Bengio, ``Generative adversarial nets,'' in
  \emph{Advances in neural information processing systems}, 2014, pp.
  2672--2680.

\bibitem{ng2002spectral}
A.~Y. Ng, M.~I. Jordan, and Y.~Weiss, ``On spectral clustering: Analysis and an
  algorithm,'' in \emph{Advances in neural information processing systems},
  2002, pp. 849--856.

\bibitem{elhamifar2013sparse}
E.~Elhamifar and R.~Vidal, ``Sparse subspace clustering: Algorithm, theory, and
  applications,'' \emph{IEEE transactions on pattern analysis and machine
  intelligence}, vol.~35, no.~11, pp. 2765--2781, 2013.

\bibitem{xie2016unsupervised}
J.~Xie, R.~Girshick, and A.~Farhadi, ``Unsupervised deep embedding for
  clustering analysis,'' in \emph{International Conference on Machine
  Learning}, 2016, pp. 478--487.

\bibitem{shi2000normalized}
J.~Shi and J.~Malik, ``Normalized cuts and image segmentation,'' \emph{IEEE
  Transactions on pattern analysis and machine intelligence}, vol.~22, no.~8,
  pp. 888--905, 2000.

\bibitem{zelnik2005self}
L.~Zelnik-Manor and P.~Perona, ``Self-tuning spectral clustering,'' in
  \emph{Advances in neural information processing systems}, 2005, pp.
  1601--1608.

\bibitem{liu2004segmentation}
R.~Liu and H.~Zhang, ``Segmentation of 3d meshes through spectral clustering,''
  in \emph{Computer Graphics and Applications, 2004. PG 2004. Proceedings. 12th
  Pacific Conference on}.\hskip 1em plus 0.5em minus 0.4em\relax IEEE, 2004,
  pp. 298--305.

\bibitem{shakhnarovich2011face}
G.~Shakhnarovich and B.~Moghaddam, ``Face recognition in subspaces,'' in
  \emph{Handbook of Face Recognition}.\hskip 1em plus 0.5em minus 0.4em\relax
  Springer, 2011, pp. 19--49.

\bibitem{candes2011robust}
E.~J. Cand{\`e}s, X.~Li, Y.~Ma, and J.~Wright, ``Robust principal component
  analysis?'' \emph{Journal of the ACM (JACM)}, vol.~58, no.~3, p.~11, 2011.

\bibitem{sobral2016lrslibrary}
A.~Sobral, T.~Bouwmans, and E.-h. Zahzah, ``Lrslibrary: Low-rank and sparse
  tools for background modeling and subtraction in videos,'' \emph{Robust
  Low-Rank and Sparse Matrix Decomposition: Applications in Image and Video
  Processing. CRC Press, Taylor and Francis Group}, 2016.

\bibitem{wright2009robust}
J.~Wright, A.~Y. Yang, A.~Ganesh, S.~S. Sastry, and Y.~Ma, ``Robust face
  recognition via sparse representation,'' \emph{IEEE transactions on pattern
  analysis and machine intelligence}, vol.~31, no.~2, pp. 210--227, 2009.

\bibitem{huber2011robust}
P.~J. Huber, ``Robust statistics,'' in \emph{International Encyclopedia of
  Statistical Science}.\hskip 1em plus 0.5em minus 0.4em\relax Springer, 2011,
  pp. 1248--1251.

\bibitem{veit2005maximality}
T.~Veit, F.~Cao, and P.~Bouthemy, ``A maximality principle applied to a
  contrario motion detection,'' in \emph{Image Processing, 2005. ICIP 2005.
  IEEE International Conference on}, vol.~1.\hskip 1em plus 0.5em minus
  0.4em\relax IEEE, 2005, pp. I--1061.

\bibitem{stauffer1999adaptive}
C.~Stauffer and W.~E.~L. Grimson, ``Adaptive background mixture models for
  real-time tracking,'' in \emph{Computer Vision and Pattern Recognition, 1999.
  IEEE Computer Society Conference on.}, vol.~2.\hskip 1em plus 0.5em minus
  0.4em\relax IEEE, 1999, pp. 246--252.

\bibitem{evangelio2012splitting}
R.~H. Evangelio, M.~P{\"a}tzold, and T.~Sikora, ``Splitting gaussians in
  mixture models,'' in \emph{Advanced Video and Signal-Based Surveillance
  (AVSS), 2012 IEEE Ninth International Conference on}.\hskip 1em plus 0.5em
  minus 0.4em\relax IEEE, 2012, pp. 300--305.

\bibitem{haines2012background}
T.~S. Haines and T.~Xiang, ``Background subtraction with dirichlet processes,''
  in \emph{European Conference on Computer Vision}.\hskip 1em plus 0.5em minus
  0.4em\relax Springer, 2012, pp. 99--113.

\bibitem{guyon2012robust}
C.~Guyon, T.~Bouwmans, and E.-h. Zahzah, ``Robust principal component analysis
  for background subtraction: Systematic evaluation and comparative analysis,''
  in \emph{Principal component analysis}.\hskip 1em plus 0.5em minus
  0.4em\relax InTech, 2012.

\bibitem{otazo2015low}
R.~Otazo, E.~Cand{\`e}s, and D.~K. Sodickson, ``Low-rank plus sparse matrix
  decomposition for accelerated dynamic mri with separation of background and
  dynamic components,'' \emph{Magnetic Resonance in Medicine}, vol.~73, no.~3,
  pp. 1125--1136, 2015.

\bibitem{bouwmans2016handbook}
A.~Aravkin and S.~Becker, ``Dual smoothing and value function techniques for
  variational matrix decomposition,'' in \emph{Handbook of Robust Low-Rank and
  Sparse Matrix Decomposition: Applications in Image and Video Processing},
  T.~Bouwmans, N.~S. Aybat, and E.-h. Zahzah, Eds.\hskip 1em plus 0.5em minus
  0.4em\relax CRC Press, 2016, ch.~3.

\bibitem{combettes2011proximal}
P.~L. Combettes and J.-C. Pesquet, ``Proximal splitting methods in signal
  processing,'' in \emph{Fixed-point algorithms for inverse problems in science
  and engineering}.\hskip 1em plus 0.5em minus 0.4em\relax Springer, 2011, pp.
  185--212.

\bibitem{burer2003nonlinear}
S.~Burer and R.~D. Monteiro, ``A nonlinear programming algorithm for solving
  semidefinite programs via low-rank factorization,'' \emph{Mathematical
  Programming}, vol.~95, no.~2, pp. 329--357, 2003.

\bibitem{bolte2014proximal}
J.~Bolte, S.~Sabach, and M.~Teboulle, ``Proximal alternating linearized
  minimization for nonconvex and nonsmooth problems,'' \emph{Mathematical
  Programming}, vol. 146, no. 1-2, pp. 459--494, 2014.

\bibitem{lee2005acquiring}
K.-C. Lee, J.~Ho, and D.~J. Kriegman, ``Acquiring linear subspaces for face
  recognition under variable lighting,'' \emph{IEEE Transactions on pattern
  analysis and machine intelligence}, vol.~27, no.~5, pp. 684--698, 2005.

\bibitem{basri2003lambertian}
R.~Basri and D.~W. Jacobs, ``Lambertian reflectance and linear subspaces,''
  \emph{IEEE transactions on pattern analysis and machine intelligence},
  vol.~25, no.~2, pp. 218--233, 2003.

\end{thebibliography}
}

\end{document}